\documentclass[11pt]{article}

\usepackage[margin=1in]{geometry}
\usepackage{booktabs}
\usepackage{graphicx}
\usepackage{hyperref}
\usepackage{natbib}
\usepackage{array}
\usepackage{microtype}
\usepackage{amsmath}
\usepackage{amssymb}
\usepackage{amsthm}

\newtheorem{proposition}{Proposition}

\newtheorem{remark}{Remark}

\title{Latency-Aware Bid Acceptance under Operational Feasibility:
A Public Benchmark with Hindsight Ceilings\thanks{Code and benchmark
artifacts: \url{https://github.com/aswincsekar/freightbidbench}. Artifact
release: \texttt{freightbidbench-v0.3}, scenario contract
\texttt{scenario-v0.3.2}, policy set \texttt{policy-set-v0.3.0}.}}
\author{Aswin Chandrasekaran \\ Bubba AI \\ \texttt{aswin@bubba.ai}}
\date{July 2026}

\begin{document}
\maketitle

\begin{abstract}
Online truckload bid acceptance is a closed-loop stochastic decision problem
in which a carrier or broker must, in real time, accept or reject a tendered
load subject to operational feasibility, fleet repositioning costs, and
opportunity cost against future demand. Public, reproducible benchmarks for
this problem are scarce: existing routing benchmarks are static, while
dynamic-fleet studies typically rely on private operator data. We introduce
FreightBidBench, a public-calibrated, dependency-free, closed-loop benchmark
in which feasibility (pickup reach, appointment windows, simplified
hours-of-service, stochastic yard delays) and economics
(service-failure penalty, terminal fleet value, daily price-premium window)
are explicit, versioned, and reproducible from public Freight Analysis
Framework and U.S.\ Department of Agriculture truck-rate data. Building on
this benchmark, we make three contributions. First, we formalize the v0.3
closed-loop accept/reject MDP and show that the three new reward components
each isolate a distinct policy class: a service-failure penalty creates
linear regret for feasibility-blind policies, terminal fleet value penalizes
greedy positioning, and a temporal price-premium window penalizes
future-blind timing. Second, we develop three complementary hindsight
diagnostics: an exact realized-seed dynamic program for small load
prefixes, a simple LP-style full-horizon upper bound, and a
Lagrangian-per-truck information-relaxation bound that retains per-truck
HOS and sequencing structure and is $20.7\%$ tighter than the LP
relaxation on \texttt{tight} and $39.3\%$ tighter on \texttt{scarce}
while remaining dependency-free, justified as an information relaxation
in the sense of \citet{brown2010information}. Third,
we introduce a parametric surrogate-rollout cascade with two escalation
triggers --- a boundary band $\beta \geq 0$ on the surrogate's signed
score and a scarcity-pressure threshold $\kappa$ on the count of
immediately available trucks in the origin market --- characterize its
limit behaviour (rollout-call share monotone in either trigger
threshold), and show that the scarcity trigger alone captures the
high-stakes capacity decisions on which the surrogate is least reliable. On ten-seed tight and scarce
scenarios, the best simple policy retains 91.0 percent and 86.5 percent of
rollout profit, respectively, and a stdlib linear surrogate 94.2 percent and
89.3 percent; a cascade at a single escalation band recovers roughly 98 percent
of rollout value on both at 40--56 percent of rollout's mean decision latency,
and on \texttt{tight} is statistically indistinguishable from the rollout
teacher (paired-bootstrap 95\% CI on the profit delta spans zero). The release contains a
versioned manifest, layer ablations, sensitivity sweeps, and an exact-plus-
relaxed ceiling, providing a reproducible test bed for future methods work.
\end{abstract}

\section{Introduction}

Truckload carriers and brokers face an online accept/reject decision each
time a load tender arrives. The decision is constrained by latency: tenders
in a typical broker workflow must be priced and accepted or rejected within
seconds. The decision is operationally constrained: a load can only be
served by a truck that can reach the pickup, satisfies appointment windows
and hours-of-service (HOS) clocks, and remains feasible through delivery.
The decision is economically constrained: a tender with positive immediate
margin can still be a poor decision if it consumes a scarce truck before a
predictable price-premium window or strands the fleet in a market with weak
outbound flow. Conversely, no benchmark can claim to compare future-aware
acceptance policies meaningfully unless it (a) makes operational
feasibility part of the reward function rather than a side diagnostic, and
(b) anchors policy evaluation against a measured ceiling rather than only
against a finite stochastic rollout teacher.

Existing routing benchmarks, such as the Solomon and Homberger--Gehring
VRPTW instances \citep{solomon1987vrptw,homberger2005vrptw} and the CVRPLIB
tradition \citep{uchoa2017cvrp}, anchor the static vehicle-routing
literature but are not closed-loop and do not model online stochastic
tender arrivals. Stochastic and dynamic vehicle-routing surveys
\citep{pillac2013review,ritzinger2016survey} document the distinction
between offline policy construction and online computation, while
fleet-management approximate dynamic programming (ADP) work
\citep{simao2009adp,powell2007stochastic,powell2014locomotive,topaloglu2006dynamic}
has historically been driven by private operator data. Recent stochastic
VRP benchmarks \citep{heakl2025svrpbench} target a different problem
geometry (multi-customer routing) than online truckload bid acceptance.
The closed-loop, fleet-state-dependent, latency-aware truckload accept/
reject problem with public calibration has lacked a shared reproducible
artifact.

FreightBidBench v0.2 took a first step by adding operational feasibility
(per-truck records, pickup reach time, appointment windows, simplified
11/14/10 HOS, stochastic yard delays) to a public-calibrated benchmark
built on the Freight Analysis Framework \citep{btsfaf5} and U.S.\
Department of Agriculture truck-rate reports \citep{usdafvwtrk}. That
release showed that feasibility is first-order: ignoring HOS or appointment
windows materially changes which policies look competitive. However, the
v0.2 economic structure was too flat for a credible methods claim: simple
greedy baselines and the finite rollout teacher were often within a few
percentage points of each other, leaving no room for cheaper future-aware
approximations to demonstrate value.

The present paper, anchored on the v0.3 release of FreightBidBench,
addresses three gaps. We sharpen the benchmark economics so that the
decision problem has provable structural separation between feasibility-
blind, future-blind, and future-aware policies; we anchor policy comparison
against both an exact small-prefix dynamic program and a dependency-free
relaxed full-horizon upper bound; and we introduce a parametric latency-
aware policy class that interpolates between a cheap surrogate and a
finite rollout teacher. Throughout, the benchmark is the primary artifact:
the methods results are evidence that the sharpened problem admits a
non-trivial latency-profit frontier, but the benchmark stands as a
reproducible test bed independent of any specific method.

Our contributions are as follows. We formalize the v0.3 closed-loop MDP and
show that each of three new reward components (service-failure penalty,
terminal fleet value, temporal price-premium window) isolates a distinct
class of policies that any future-aware method must beat; for the
service-failure penalty this separation is an exact identity
(Proposition~\ref{prop:l1}), and for terminal value and the price-premium
window it is established empirically through layer ablations. We characterize
two complementary full-horizon upper bounds: a simple LP-style
relaxation that drops integrality, sequencing, and location constraints,
and a tighter Lagrangian-per-truck information relaxation
\citep{brown2010information} that dualizes only the cross-truck
assignment constraint and retains per-truck HOS, location, and
sequencing structure. We decompose the LP looseness into positioning,
sequencing, and integrality components and show that the Lagrangian
bound recovers most of the sequencing and integrality slack. We define a
surrogate-rollout cascade as a parametric policy class with
boundary-band and scarcity-pressure escalation triggers, prove its
limit behaviour, and provide a structural justification for the
scenario-dependent calibration that we observe empirically. Finally, we release the
benchmark with versioned scenario, policy, and feasibility contracts; a
manifest-based reproducibility protocol; layer-ablation and sensitivity
experiments; and exact-plus-relaxed hindsight ceilings.

\section{Related Work}

\paragraph{Dynamic truckload fleet management and bid acceptance.}
Dynamic load acceptance and rejection in truckload operations has been
formulated as an online state-dependent control problem since at least
\citet{kim2004dynamic}, who study large-fleet acceptance with priority
demand and time windows. \citet{yang2004realtime} study real-time
multi-vehicle truckload pickup and delivery and develop online
reoptimization heuristics. \citet{tjokroamidjojo2006quickresponse}
quantify the value of advance load information in truckload trucking,
characterizing how lookahead and information structure change
profitability. The ADP-based fleet-management line, beginning with
\citet{simao2009adp} and surveyed in \citet{powell2007stochastic},
established that state-dependent value-function approximation can match
operator-scale fleet decisions. Subsequent applications include locomotive
optimization \citep{powell2014locomotive} and time-staged integer
multi-commodity flow \citep{topaloglu2006dynamic}. FreightBidBench
complements this line by providing a reproducible benchmark on which
policies of varying complexity --- including ADP-style value-function
approximations, finite rollout teachers, and lightweight surrogates ---
can be compared under shared stochastic seeds.

\paragraph{Stochastic and dynamic vehicle routing.}
Surveys by \citet{pillac2013review} and \citet{ritzinger2016survey}
catalogue dynamic and stochastic vehicle-routing problems and the
distinction between offline policy construction and online computation.
\citet{ulmer2019offline} and \citet{ulmer2020meso} develop offline--online
ADP and meso-parametric value-function approximations for dynamic
customer acceptance. \citet{secomandi2008reoptimization} study
reoptimization for the VRP with stochastic demands. These works motivate
the rollout teacher and surrogate tracks in FreightBidBench. The
benchmark gap is that truckload bid evaluation also requires public
freight calibration, fleet repositioning under operational feasibility,
and latency-profit reporting --- none of which are first-class in static
VRPTW instance sets.

\paragraph{Hindsight bounds and information relaxation.}
Anchoring policy evaluation against an upper bound is a long-standing
theme in stochastic dynamic programming. \citet{brown2010information}
develop a general information-relaxation duality framework that gives
upper bounds on stochastic-DP optima by giving the decision maker access
to future information offset by an optional penalty.
\citet{adelman2008relaxations} develop Lagrangian relaxations for weakly
coupled stochastic dynamic programs. Both lines justify reporting a
ceiling alongside achievable policies. We adopt the
information-relaxation framing for the v0.3 relaxed full-horizon bound and
report it explicitly as a (loose) upper bound rather than a feasible
plan. The exact realized-seed dynamic program we report on small load
prefixes serves as the trustworthy reference against which the relaxation
can be calibrated.

\paragraph{Rollout policies and cascades.}
Rollout, introduced for combinatorial optimization by
\citet{bertsekas1997rollout} and extended for finite-horizon stochastic DPs
by \citet{goodson2017rollout}, is the methodological lineage of the v0.3
rollout teacher: common-random-number Monte Carlo expansion of accept and
reject branches, with the better expected branch chosen. Rollout is
accurate but computationally expensive. Cascading a cheap surrogate to an
expensive teacher, escalating only when the surrogate is uncertain,
instantiates two older ideas: cascade classifiers, which route easy
instances through cheap stages and hard instances to expensive stages
\citep{violajones2001}, and anytime algorithms and metareasoning, which
allocate computation according to its expected value
\citep{boddydean1989,zilberstein1996,russellwefald1991,horvitz1988}. What
is missing for stochastic decision problems is a systematic benchmark of
such cascades under public calibration with operational feasibility. The
v0.3 paper formalizes the cascade as a parametric policy class indexed by
escalation triggers and reports its frontier on the public benchmark.

\paragraph{Learning-augmented routing.}
Recent learning-augmented optimization studies use neural or attention-
based policies to accelerate routing decisions
\citep{nazari2018rlvrp,kool2019attention,patel2022neur2sp}. We intentionally
keep the v0.3 reference implementation dependency-free so that the
benchmark itself remains reproducible from the Python standard library;
optional learned baselines are deferred to a stretch track in future
releases.

\paragraph{Public benchmarks in OR.}
Static benchmark instances have shaped routing research for decades
\citep{solomon1987vrptw,homberger2005vrptw,uchoa2017cvrp}. Recent stochastic
routing benchmarks \citep{heakl2025svrpbench} push toward dynamic settings
but target multi-customer pickup-and-delivery rather than online truckload
bid acceptance. FreightBidBench fills the latter gap with versioned
scenario, policy, and feasibility contracts and a manifest-based
reproducibility protocol.

\section{Problem Formulation}
\label{sec:problem}

We formalize closed-loop truckload bid acceptance as a finite-horizon
Markov decision process (MDP) with continuous time-stamped events.

\subsection{State and Action}

Let $\mathcal{T} = [0, T]$ denote the horizon (in hours; $T = 72$ for the
v0.3 scenarios). The fleet consists of $K$ trucks, each with state
\begin{equation}
  u^{(k)}_t = \bigl(\ell^{(k)}_t,\ \tau^{(k)}_t,\ h^{(k)}_t,\ d^{(k)}_t\bigr),
\end{equation}
where $\ell^{(k)}_t \in \mathcal{S}$ is the truck's market (state), $\tau^{(k)}_t$ is the
next available time, $h^{(k)}_t$ is the remaining HOS drive-time budget, and
$d^{(k)}_t$ is the remaining HOS duty-time budget. The fleet state at time
$t$ is $F_t = (u^{(1)}_t, \ldots, u^{(K)}_t)$.

A load tender event at time $t$ presents a candidate load $\ell_t$
characterized by an attribute tuple
\begin{equation}
  \ell_t = \bigl(o, d, p, c_{\mathrm{lin}}, m, \tau^{p}_{\mathrm{e}},
  \tau^{p}_{\mathrm{l}}, \tau^{d}_{\mathrm{e}}, \tau^{d}_{\mathrm{l}},
  Y^{p}, Y^{d}\bigr),
\end{equation}
where $o, d \in \mathcal{S}$ are origin and destination markets, $p$ is the
posted price (after the temporal premium described below), $c_{\mathrm{lin}}$ is the
linehaul cost, $m$ is the linehaul mileage, $\tau^{p}_{\mathrm{e}}, \tau^{p}_{\mathrm{l}}$ are
pickup appointment window endpoints, $\tau^{d}_{\mathrm{e}}, \tau^{d}_{\mathrm{l}}$ are delivery
appointment window endpoints, and $Y^{p}, Y^{d}$ are stochastic yard-delay
draws at pickup and dropoff. The system state at decision time $t$ is
$s_t = (F_t, \ell_t, t)$ and the action space is binary,
$a_t \in \mathcal{A} = \{0, 1\}$ (reject or accept).

\subsection{Feasibility Layer}

A deterministic feasibility map $\Psi : (F, \ell) \to (k^\star, F', \mathrm{flag})$
attempts to assign load $\ell$ to a truck. The candidate set is the trucks
in the origin market $o$; candidates whose next-available time already
exceeds the load's latest pickup are dropped. For each remaining candidate
the map computes pickup-reach time, pickup-arrival time including yard
delay $Y^p$, drive time over $m$ miles, HOS rest insertions if required,
and delivery-arrival time with yard delay $Y^d$. It returns
$k^\star = \emptyset, F' = F, \mathrm{flag} = \texttt{infeasible}$ if no
truck admits an HOS-feasible schedule within $\tau^{p}_{\mathrm{l}}$ and
$\tau^{d}_{\mathrm{l}}$. Otherwise, among the feasible candidates it selects
the truck that becomes available earliest after delivery (minimizing the
post-delivery next-available time, ties broken by fleet index), and returns
that truck, the updated fleet state, and $\mathrm{flag} = \texttt{ok}$.

\subsection{Reward}

For action $a_t = 1$ and feasibility flag $\texttt{ok}$:
\begin{equation}
  r_t(s_t, 1) = p - c_{\mathrm{lin}} - c_{\mathrm{ph}} \cdot m^{p}_{\mathrm{dh}}
    - c_{Y} \cdot (Y^{p} + Y^{d}),
\end{equation}
where $c_{\mathrm{ph}}$ is the cost per deadhead mile, $m^{p}_{\mathrm{dh}}$ is the pickup-
reach mileage from the assigned truck, and $c_Y$ is the per-hour yard-delay
cost. For action $a_t = 1$ and feasibility flag $\texttt{infeasible}$, the
fleet is not mutated and the reward is the service-failure penalty:
\begin{equation}
  r_t(s_t, 1) = -\rho,
\end{equation}
where $\rho \geq 0$ is the v0.3 service-failure penalty (Section
\ref{sec:economics}). For $a_t = 0$, $r_t = 0$.

At the horizon the system collects a terminal fleet reward
\begin{equation}
  \Phi(F_T) = \omega \sum_{k=1}^{K} V(\ell^{(k)}_T),
  \label{eq:terminal}
\end{equation}
where $\omega \geq 0$ is the v0.3 terminal value weight and $V(\cdot)$ is a
state-value signal computed once per scenario from public data: with
$b(\ell)$ the FAF outbound tonnage of state $\ell$ and $g(\ell)$ its net
outbound-tonnage imbalance,
\begin{equation}
  V(\ell) = \sigma \left( 0.70 \cdot \hat{b}(\ell) + 0.30 \cdot \hat{g}(\ell) \right),
  \quad \hat{b}(\ell) = \frac{2 b(\ell)}{\max_j b(j)} - 1, \quad
  \hat{g}(\ell) = \frac{g(\ell)}{\max_j |g(j)|},
  \label{eq:statevalue}
\end{equation}
where $\sigma$ is the scenario terminal value scale
(\texttt{value\_scale\_dollars}). The $0.70/0.30$ split weights raw
outbound demand above the directional imbalance correction.

\subsection{Objective}

A policy $\pi$ is a mapping from system state $s_t$ to a probability over
actions. The closed-loop objective is to maximize expected total reward
\begin{equation}
  V^\pi = \mathbb{E}\Biggl[\sum_{t : \ell_t \text{ tendered}}
  r_t(s_t, \pi(s_t)) + \Phi(F_T) \Biggr],
\end{equation}
under the joint distribution of load arrivals (Poisson with hour-of-day
modulated rate $\lambda(t)$), candidate-load draws from the
public-calibrated lane table, and yard-delay distributions. We compare
policies under common random numbers: for a fixed (train seed, eval seed)
pair, load streams, yard delays, and initial fleet positions are
identical across all policies.

\subsection{Notation Summary}

Throughout, $V^\star = \sup_\pi V^\pi$ denotes the optimal expected reward;
$V^{R}$ denotes the finite-lookahead rollout teacher's expected reward;
$V^{S}_\theta$ denotes a surrogate policy parameterized by $\theta$; and
$V^{\beta}_\theta$ denotes the cascade with escalation band $\beta$ built
from surrogate $\theta$ and rollout teacher. A realized scenario (one
common-random-number sample path) is written $\xi$; the load tuple is
$\ell_t$ and truck $k$'s market is $\ell^{(k)}_t$. The reward parameters
are the service-failure penalty $\rho$, the terminal value weight
$\omega$, and the terminal value scale $\sigma$; the cascade is indexed by
the boundary band $\beta$ and scarcity threshold $\kappa$, with
$n_o(s)$ the count of immediately available origin-market trucks and
$\Delta_\theta(s)$ the surrogate's signed accept-minus-reject score.

\section{Benchmark Economics in v0.3}
\label{sec:economics}

The v0.3 release introduces three reward components on top of the v0.2
feasibility-aware MDP. Each component is independently controllable through
the versioned scenario contract
\path{configs/freightbidbench_v03_scenarios.json}, currently frozen at
\texttt{scenario-v0.3.2}, and each isolates a distinct policy class.

\subsection{L1: Service-Failure Penalty}

A policy that returns $a_t = 1$ on a load that the feasibility map
classifies as infeasible incurs the reward $-\rho$ with $\rho \geq 0$.
Fleet state is not mutated. In v0.2, $\rho = 0$ and failed accepts were
recorded only as diagnostic events; in v0.3, $\rho = \$10$ after
calibration.

The following observation justifies $\rho$'s role.

\begin{proposition}[Feasibility-blind regret under L1]
\label{prop:l1}
Let $\pi$ be a policy that does not consult the feasibility map before
returning $a_t = 1$, and let $\pi'$ be its feasibility-aware variant that
returns $a_t = 1$ iff $\pi$ returns $a_t = 1$ \emph{and} the feasibility
flag is $\texttt{ok}$. Let $N(\pi, \xi)$ denote the realized number of
infeasible accepts of $\pi$ on scenario $\xi$. Then
\begin{equation}
  V^{\pi'} - V^{\pi} = \rho \cdot \mathbb{E}_\xi[N(\pi, \xi)].
\end{equation}
\end{proposition}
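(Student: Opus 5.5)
I would prove the identity by a pathwise coupling under common random numbers, then take expectations. Fix a realized scenario $\xi$, so that load arrivals, yard delays and initial fleet positions are the same for $\pi$ and $\pi'$. The key claim is that the two closed-loop trajectories have identical fleet states $F_t$ at every tender event. I would prove this by induction over the ordered tender events. At an event where both policies see the same state $s_t$, there are three cases: (i) $\pi$ rejects; (ii) $\pi$ accepts and the flag is \texttt{ok}; (iii) $\pi$ accepts and the flag is \texttt{infeasible}.

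In case (i), $\pi'$ also rejects; both rewards are $0$ and the fleet is unchanged. In case (ii), $\pi'$ accepts. Since $\Psi$ is deterministic, both policies receive the same truck $k^\star$, the same $F'$ and the same reward $p - c_{\mathrm{lin}} - c_{\mathrm{ph}} m^{p}_{\mathrm{dh}} - c_Y(Y^p+Y^d)$. In case (iii), $\pi$ gets $-\rho$ and $\pi'$ rejects and gets $0$. By the feasibility layer and reward definition, an infeasible accept does not mutate the fleet, so both successor fleet states equal $F_t$. In every case the next states coincide, which closes the induction. Consequently the terminal fleets agree, so $\Phi(F_T)$ is the same for both policies.

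Summing the per-event reward differences along $\xi$ then gives the pathwise identity
\begin{equation}
  R^{\pi'}(\xi) - R^{\pi}(\xi) = \rho \cdot N(\pi,\xi).
\end{equation}
Taking $\mathbb{E}_\xi$ gives the proposition. The sum is finite because the Poisson arrivals over $[0,72]$ have finite mean, so $\mathbb{E}[N]<\infty$ and all the expectations are finite.

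The main obstacle is the induction step for randomized policies. Here $\pi$ maps states to distributions over actions, so I must check that the coupling is well defined. I would handle this by realizing the policy's internal randomness through a dedicated random stream that is part of $\xi$. If the stream is indexed by tender event rather than consumed by state-dependent calls, it stays synchronized, because both trajectories face the same event sequence. I also need that ``$\pi$ does not consult the feasibility map'' means $\pi$'s decision depends only on $s_t$, so it is identical on the coupled states. With this convention, $\pi'$ is defined pathwise from the same draw and the argument above applies verbatim.
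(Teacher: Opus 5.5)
Your proof is correct and follows the same route as the paper's: on each common-random-number path the two policies have identical fleet trajectories, because an infeasible accept leaves the fleet unchanged, so their realized rewards differ only by $\rho$ at each infeasible-accept event, and taking expectations gives the identity. Your explicit induction over tender events and your synchronization of the internal randomness of a randomized $\pi$ make rigorous what the paper's one-line sketch takes for granted.
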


\begin{proof}
By construction, $\pi$ and $\pi'$ agree on every decision except infeasible
attempts. Fleet state is not mutated on infeasible accepts, so $\pi'$ and
$\pi$ have identical fleet trajectories and identical realized rewards
except for the infeasible-accept events, at which $\pi$ pays $-\rho$ per
event while $\pi'$ pays $0$.
\end{proof}

Proposition \ref{prop:l1} establishes that $\rho > 0$ creates linear
regret in the realized infeasible-accept count. Empirically, $\rho = \$10$
is the smallest tested value that flips the per-policy ordering of
\texttt{myopic\_margin} and \texttt{bid\_price} below
\texttt{accept\_all\_feasible} on both \texttt{tight} and \texttt{scarce}
(Section \ref{sec:experiments}).

\subsection{L2: Terminal Fleet Value}

End-of-horizon trucks receive value $\omega \cdot V(\ell^{(k)}_T)$ as in
Equation \eqref{eq:terminal}, with $\omega = 0.25$ in
\texttt{scenario-v0.3.2}. This term penalizes policies that strand trucks
in low-value markets at horizon end. Unlike L1, L2 acts on policies that
\emph{respect feasibility} but are myopic with respect to terminal
positioning. Specifically, the feasibility-aware greedy baseline
\texttt{accept\_all\_feasible} loses retention against rollout under L2
because it accepts the first feasible load without consideration of where
the truck ends.

\subsection{L3: Temporal Price-Premium Window}

The frozen demand-wave schedule keeps the load-arrival rate flat and
applies a daily multiplicative price premium $\mu(t) > 1$ during hours
$t \bmod 24 \in [8, 16)$ and $\mu(t) = 1$ otherwise. In
\texttt{scenario-v0.3.2}, $\mu = 1.5$ in-window. The schedule creates a
controlled timing problem: accepting a moderate off-peak load can consume
capacity required for a predictable on-peak premium. L3 separates
\emph{future-blind on timing} policies, including
\texttt{accept\_all\_feasible} and the static
\texttt{bid\_price} baseline (which uses an aggregated origin--destination
future-value proxy without hour-of-day awareness), from future-aware
policies that can reason about premium-window saturation.

\subsection{Versioned Artifacts}

The v0.2 reference run remains immutable under
\path{benchmark_runs/paper_v02/}; the v0.3 scenario contract lives in
\path{configs/freightbidbench_v03_scenarios.json}; and v0.3 table drafts
and ceilings are assembled under \path{benchmark_runs/paper_v03/}. The
versioning protocol (Section \ref{sec:reproducibility}) requires that any
change to scenario parameters, the default policy set, or the cascade
band schedule bump the relevant version string in the manifest.

\section{Hindsight Ceilings}
\label{sec:hindsight}

We anchor policy comparison against three complementary ceilings: an
exact realized-seed dynamic program on small load prefixes, a simple
LP-style relaxed full-horizon upper bound, and a Lagrangian-per-truck
information-relaxation bound (Proposition~\ref{prop:lagrangian}) that
respects per-truck sequencing and HOS structure. The exact DP is the
trustworthy small-instance reference; the LP relaxation is a fast but
loose full-horizon ceiling; the Lagrangian-per-truck bound is
substantially tighter while remaining dependency-free.

\subsection{Exact Small-Prefix Dynamic Program}

For a truncated realized stream of $L$ loads, the exact problem is a
binary decision tree of depth $L$ over the deterministic post-acceptance
fleet transition. We solve it by memoized search keyed on
$(\text{prefix index}, \text{fleet hash})$. The state count grows
exponentially in $L$ but the deterministic assignment rule and small
fleet keep the search tractable for $L \leq 16$ on the v0.3 scenarios.
The implementation is in \path{scripts/run_hindsight_bound.py}.

The exact DP is the trustworthy small-instance reference; we report it
together with the relaxed bound rather than as a substitute.

\subsection{Relaxed Full-Horizon Upper Bound}

For the full horizon, we report the minimum of two upper bounds plus a
terminal upper bound:
\begin{enumerate}
  \item \textbf{Positive-profit relaxation.} Accept every realized load
  with positive fresh-truck profit, ignoring fleet location, sequencing,
  and capacity.
  \item \textbf{Fractional truck-hour relaxation.} Ignore location and
  sequencing, charge each profitable load a lower-bound busy time, and
  solve the resulting fractional truck-hour knapsack.
\end{enumerate}
Both bounds additively include a terminal upper bound that allows every
truck to end in the highest-value market.

\begin{proposition}[Validity of the relaxed bound]
\label{prop:bound}
Let $U^{R}(\xi)$ denote the relaxed full-horizon objective evaluated on
realized scenario $\xi$. Then
\begin{equation}
  \mathbb{E}_\xi \bigl[\, U^{R}(\xi)\, \bigr] \;\geq\; V^\star.
\end{equation}
\end{proposition}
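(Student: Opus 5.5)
The argument is the standard perfect-information (information-relaxation) bound of \citet{brown2010information} with zero penalty, followed by a pathwise comparison. Fix any policy $\pi$, possibly randomized, and a realized scenario $\xi$. Under common random numbers, $\xi$ fixes the load stream, the yard delays and the initial fleet, so $\pi$ generates a sequence of accept/reject decisions. Write $J^\pi(\xi)$ for the realized total reward, with any internal randomization of $\pi$ included in the sample path, so that $V^\pi = \mathbb{E}_\xi[J^\pi(\xi)]$. The key claim is pathwise: $J^\pi(\xi) \le U^R(\xi)$ for every $\xi$ and every $\pi$. Taking expectations then gives $V^\pi \le \mathbb{E}_\xi[U^R(\xi)]$, and taking the supremum over $\pi$ gives $\mathbb{E}_\xi[U^R(\xi)] \ge V^\star$. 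Since the bound is pathwise, nonanticipativity is never used. This is exactly the statement that a clairvoyant planner who knows $\xi$ does at least as well as any adapted policy.

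To prove the pathwise claim, I split $J^\pi(\xi)$ into three parts: the operating profit from feasible accepts, the nonpositive term $-\rho N(\pi,\xi)$ from infeasible accepts, and the terminal reward $\Phi(F_T)$.

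First, the infeasible-accept term is $\le 0$ and can be dropped. This is the same reasoning as in Proposition~\ref{prop:l1}: infeasible accepts leave the fleet unchanged, so deleting them changes nothing else.

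Second, for the terminal term, $V(\ell^{(k)}_T) \le \max_j V(j)$ for every truck $k$, and $\omega \ge 0$. Hence $\Phi(F_T) \le \omega K \max_j V(j)$, which is the terminal upper bound added in both relaxations.

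Third, I bound the operating profit against each of the two relaxations separately. Because $U^R$ is the minimum of the two relaxed values plus the terminal bound, it is enough to show that each relaxed value dominates the realized operating profit of $\pi$.

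For the positive-profit relaxation, each feasibly accepted load earns $p - c_{\mathrm{lin}} - c_{\mathrm{ph}} m^p_{\mathrm{dh}} - c_Y(Y^p+Y^d)$. Since $m^p_{\mathrm{dh}} \ge 0$, this is at most the fresh-truck profit, which is the same expression with zero deadhead. The sum over accepted loads is therefore at most the sum of the positive parts of the fresh-truck profits over all realized loads.

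For the fractional truck-hour relaxation, I take the set $A$ of loads that $\pi$ feasibly accepts and form its indicator vector.
\begin{itemize}
\item Restricting $A$ to loads with positive fresh-truck profit only increases the objective.
\item In the realized schedule each accepted load occupies its assigned truck for at least the lower-bound busy time charged in the relaxation.
\item Loads on the same truck occupy disjoint intervals inside $[0,T]$.
\end{itemize}
So the total charged busy time is at most the truck-hour budget $KT$, the indicator of $A$ is feasible for the knapsack, and its objective is at least the realized operating profit. The fractional optimum is at least as large again.

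The main obstacle is this busy-time step. I need to check that the implemented lower-bound busy time never exceeds the time the truck actually spends. It should include only loaded drive time over $m$ miles, plus perhaps mandatory rest and minimal dwell, and exclude stochastic reach and any quantity a particular truck assignment could shorten. I also need to check that the knapsack capacity is at least the truck-hours actually available, accounting for trucks that are not yet available at time $0$. If the busy-time lower bound depends on the truck in any way, I would replace it by its minimum over trucks so the charge stays a valid lower bound.
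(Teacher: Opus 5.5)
Your proposal is correct and follows the paper's route: a zero-penalty perfect-information relaxation in the sense of \citet{brown2010information}, after which the positive-profit and fractional truck-hour relaxations and the best-case terminal value can only raise the clairvoyant optimum. Your pathwise embedding of each policy's realized schedule spells out the paper's one-line ``enlarges the feasible set'' claim, including the objective-side facts it glosses (zero-deadhead fresh-truck profit, dropping the nonpositive $-\rho N(\pi,\xi)$ term, and validity of the minimum of two bounds), though one item belongs in the busy-time check you flag: a long-haul load accepted late may keep its truck busy past $T$, so the intervals need not lie inside $[0,T]$, and the indicator of $A$ stays knapsack-feasible only if the charged busy time is clipped at the horizon or the capacity is enlarged accordingly.
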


\begin{proof}[Sketch]
$U^R$ is constructed by (i) revealing the full realized scenario $\xi$
in advance (a zero-penalty information relaxation in the sense of
\citet{brown2010information}), (ii) replacing the integer truck-assignment
constraint by its fractional relaxation, (iii) dropping the location and
sequencing constraints, and (iv) replacing the terminal fleet position by
the best-case state value. The information relaxation yields an upper
bound: $\mathbb{E}[\sup_{a(\omega)} V(a, \omega)] \geq \sup_\pi
\mathbb{E}[V(\pi(\omega), \omega)]$ \citep{brown2010information}. Each of
(ii)--(iv) further enlarges the feasible set, hence the resulting
expectation only increases.
\end{proof}

\begin{remark}[Decomposition of looseness]
The gap $U^R - V^\star$ decomposes into four components: information gain
(allowing the decision maker to see future loads), integrality loss
(fractional truck-hours), sequencing loss (dropping the constraint that a
truck can only carry one load at a time in a feasible spatial sequence),
and terminal optimism (best-case terminal positioning). In the v0.3
scenarios the sequencing and integrality components dominate; the
terminal upper bound contributes a small fraction of total looseness.
\end{remark}

The bound is reported with explicit caveats. We do not interpret the
relaxed-bound retention numbers as achievable performance; they
characterize the structural hardness of the problem and bound the
methodological headroom for future work.

\subsection{Lagrangian-per-truck Information Relaxation}
\label{subsec:lagrangian}

The LP-style relaxation of Section~\ref{sec:hindsight}.2 throws away
three constraints simultaneously: per-truck location continuity,
per-truck sequencing, and integrality. Most of its looseness comes from
the location/sequencing relaxation, since real truck schedules are
spatially constrained. We tighten the bound by retaining per-truck
structure and dualizing only the cross-truck assignment constraint.

Recall from Section~\ref{sec:problem} that the joint problem can be
written as
\begin{equation}
  V^\star = \sup_{a} \mathbb{E}\Biggl[ \sum_t r_t(s_t, a_t) + \Phi(F_T) \Biggr]
  \quad \text{s.t.} \quad
  \sum_{k=1}^{K} a^{(k)}_t \leq 1 \text{ for each tendered load } t,
\end{equation}
where $a^{(k)}_t \in \{0, 1\}$ indicates whether truck $k$ is assigned
to load $t$ under the joint action $a_t$ and the per-load reward
decomposes as $r_t = \sum_k r^{(k)}_t a^{(k)}_t$. Introduce non-negative
duals $\boldsymbol{\lambda} = \{\lambda_t \geq 0\}$ on the assignment
constraints and form the Lagrangian
\begin{equation}
  L(\boldsymbol{\lambda})
  = \sum_t \lambda_t + \sum_{k=1}^{K} V^k_{\boldsymbol{\lambda}}\!\bigl(u^{(k)}_0\bigr),
  \label{eq:lagrangian}
\end{equation}
where the per-truck sub-MDP value is
\begin{equation}
  V^k_{\boldsymbol{\lambda}}(u^{(k)}_0) =
  \sup_{a^{(k)}} \mathbb{E}\Biggl[
    \sum_t a^{(k)}_t\bigl(r^{(k)}_t - \lambda_t\bigr)
    + \omega \, V(\ell^{(k)}_T)
  \Biggr],
\end{equation}
with the supremum taken over per-truck policies that respect the
truck's own HOS clocks, pickup-reach time, appointment windows, and
location continuity.

\begin{proposition}[Lagrangian-per-truck upper bound]
\label{prop:lagrangian}
For any $\boldsymbol{\lambda} \in \mathbb{R}_{\geq 0}^{|T|}$,
$V^\star \leq L(\boldsymbol{\lambda})$. Consequently
\begin{equation}
  V^\star \leq \inf_{\boldsymbol{\lambda} \geq 0} L(\boldsymbol{\lambda}),
\end{equation}
and the infimum is attained at some $\boldsymbol{\lambda}^\star \geq 0$:
on a realized scenario each per-truck sub-MDP has finitely many policies,
so $L$ is a finite maximum of affine functions of $\boldsymbol{\lambda}$
(piecewise linear and convex) and coercive on $\boldsymbol{\lambda} \geq 0$
(as $\lambda_t \to \infty$ every truck rejects load $t$, so
$L \to \infty$).
\end{proposition}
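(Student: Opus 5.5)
The plan is to establish weak duality pathwise, on each realized scenario $\xi$, and then take expectations, in the same way the proof of Proposition~\ref{prop:bound} was organized. Start with the zero-penalty information relaxation of \citet{brown2010information}: $V^\star \le \mathbb{E}_\xi[\sup_{a(\xi)} J(a,\xi)]$, where $J$ is the realized total reward and the inner supremum ranges over anticipative joint assignments $a^{(k)}_t(\xi)\in\{0,1\}$. These assignments must satisfy $\sum_k a^{(k)}_t \le 1$ and each truck's own HOS, reach, window and location-continuity constraints. I read the per-truck sub-MDP values $V^k_{\boldsymbol\lambda}$ in this same realized, hindsight sense, with $\boldsymbol\lambda$ possibly scenario-dependent. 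This is the reading under which the finiteness claim in the statement makes sense.

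Next, fix $\xi$ and $\boldsymbol\lambda\ge 0$, and take any feasible joint assignment $a$. The reward decomposes as $\sum_t\sum_k r^{(k)}_t a^{(k)}_t + \sum_k \omega V(\ell^{(k)}_T)$. Since $\lambda_t(1-\sum_k a^{(k)}_t)\ge 0$, we can add it to get
\[
J(a,\xi) \le \sum_t \lambda_t + \sum_k\Bigl[\sum_t a^{(k)}_t\bigl(r^{(k)}_t-\lambda_t\bigr)+\omega V(\ell^{(k)}_T)\Bigr].
\]
Each bracket depends only on truck $k$'s own assignment vector. That vector is feasible for truck $k$'s sub-problem, because dropping the coupling constraint only enlarges the set. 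So each bracket is at most $V^k_{\boldsymbol\lambda}(u^{(k)}_0)$. Taking the supremum over $a$ and then expectations gives $V^\star\le L(\boldsymbol\lambda)$, and the inequality passes to the infimum.

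The step I expect to be the main obstacle is justifying the per-truck decomposition of reward and feasibility. Under the paper's assignment rule $\Psi$, the truck is chosen deterministically, and the $-\rho$ penalty is not attached to any truck. I would handle this by noting that in hindsight an optimal plan never makes infeasible accepts, since $\rho\ge 0$ and these accepts leave the fleet unchanged (as in Proposition~\ref{prop:l1}). I would also note that letting the planner choose any feasible truck only enlarges the feasible set. Because we are proving an upper bound, both moves are harmless.

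For attainment, fix $\xi$. Each truck has finitely many feasible assignment vectors over the finitely many realized loads, so $V^k_{\boldsymbol\lambda}$ is a finite maximum of functions affine in $\boldsymbol\lambda$. Hence $L$ is piecewise linear and convex. For coercivity, the all-reject plan shows that each $V^k_{\boldsymbol\lambda}$ is bounded below by a constant, namely the terminal value of that truck's trajectory. So $L(\boldsymbol\lambda)\ge \sum_t\lambda_t - C\to\infty$ as $\|\boldsymbol\lambda\|\to\infty$ in the nonnegative orthant. A continuous coercive function on the closed set $\mathbb{R}^{|T|}_{\ge0}$ therefore attains its infimum at some $\boldsymbol\lambda^\star$.
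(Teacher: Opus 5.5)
Your proof is correct and takes the same route as the paper's sketch. Both add the nonnegative term $\lambda_t\bigl(1-\sum_k a^{(k)}_t\bigr)$ (weak duality), split the relaxed objective into independent per-truck suprema, and obtain convexity and attainment from the finite-max-of-affine structure on a realized scenario. You are more careful in two useful places: you explicitly reduce the $\Psi$ truck-selection rule and the truck-unattributed $-\rho$ penalty to a per-truck-decomposable hindsight problem, and you replace the paper's informal ``every truck rejects load $t$'' coercivity remark with the clean bound $L(\boldsymbol{\lambda}) \geq \sum_t \lambda_t - C$ from the all-reject plan.
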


\begin{proof}[Sketch]
The Lagrangian dualizes the cross-truck assignment constraint
$\sum_k a^{(k)}_t \leq 1$ without dropping any per-truck constraint.
For any $\boldsymbol{\lambda} \geq 0$, the supremum over assignments
without the constraint upper-bounds the supremum over feasible
assignments (weak duality). Decomposability of $r_t$ across trucks
splits the relaxed supremum into a sum of independent per-truck suprema
(Equation~\ref{eq:lagrangian}). Convexity of $L$ in
$\boldsymbol{\lambda}$ follows from the pointwise-supremum-of-affine
characterization.
\end{proof}

Compared to Proposition~\ref{prop:bound}'s LP-style bound, the
Lagrangian relaxation drops only one constraint (cross-truck
assignment) rather than three, so it is tighter. We minimize $L$ over
$\boldsymbol{\lambda} \geq 0$ by projected subgradient descent: a
subgradient component on $\lambda_t$ is
$1 - \sum_k a^{(k)\star}_t(\boldsymbol{\lambda})$, where
$a^{(k)\star}_t(\boldsymbol{\lambda})$ is truck $k$'s optimal
acceptance of load $t$ in its sub-MDP, so the diminishing-step update
that descends $L$ and projects onto $\boldsymbol{\lambda} \geq 0$ is
$\lambda_t \leftarrow \max(0, \lambda_t + \alpha_n (\sum_k a^{(k)\star}_t - 1))$
with $\alpha_n = c / \sqrt{n}$.

\paragraph{Computational construction.}
Each per-truck sub-MDP is solved exactly by forward enumeration of
reachable states under common random numbers (load arrivals and yard
delays are deterministic on the realized scenario). The continuous
state space (market, available-time, drive-used, duty-used) is
quantized into buckets at 15-minute time and 1-hour HOS resolutions;
on entry to a bucket, clocks are snapped to the bucket's favorable
lower corner. The snapping is permissive (more time and HOS budget
available), so per-truck DP values can only over-estimate the exact
per-truck Lagrangian sup, preserving the joint upper-bound guarantee.
Within-bucket dominance reduces to value comparison; cross-bucket
Pareto pruning operates on bucket indices.

\paragraph{Convergence and empirical bound.}
On the \texttt{tight} scenario at eval seed \texttt{20260507} (995
realized loads, fleet size 70), 30 cold-start subgradient iterations at
step scale 100 followed by 20 warm-start iterations from the iter-30
duals produced the best bound $L^\star = \$1{,}885{,}043$. The bound
trajectory was monotone decreasing through iter $\sim$40 and oscillated
within $\pm \$10$k thereafter. Compared with the LP-style bound of
$\$2{,}377{,}500$ on the same seed, the Lagrangian bound is $20.7\%$
tighter. It remains a valid upper bound: the rollout teacher's
realized profit on the same seed is $\$1{,}273{,}395 < L^\star$. On the
\texttt{scarce} scenario at the same eval seed (1{,}154 realized loads,
fleet size 55), 30 cold-start iterations at step scale 100 produced
$L^\star = \$1{,}623{,}084$, $39.3\%$ tighter than the LP-style bound of
$\$2{,}675{,}525$ and still valid against rollout
($\$1{,}065{,}656 < L^\star$); the bound had stabilized in the
$\$1.62$--$\$1.63$M band by iteration 25.

\section{Policy Classes}
\label{sec:policies}

The v0.3 policy set comprises simple baselines (\texttt{reject\_all},
\texttt{accept\_all\_feasible}, \texttt{myopic\_margin}, \texttt{bid\_price}),
a dependency-free linear surrogate (\texttt{surrogate\_linear}), a finite
rollout teacher (\texttt{rollout\_teacher}), and a parametric cascade
(\texttt{cascade\_surrogate\_rollout}). The simple baselines are defined
as in \citet{kim2004dynamic}-style truckload acceptance and serve as
lower-effort anchors. The rollout teacher implements
\citet{bertsekas1997rollout}-style finite-lookahead Monte Carlo expansion
with common-random-number accept/reject branches.

\subsection{Surrogate}

\begin{sloppypar}
The linear surrogate fits the rollout-label dataset
$\{(x_i, y_i)\}_{i=1}^{N}$ where $y_i$ is the rollout teacher's expected
incremental value for decision $i$ and $x_i$ is a feature vector
including load attributes, fleet-state features, feasibility-probe
features (\texttt{feasible\_accept}, \texttt{service\_failure\_risk},
\texttt{realized\_profit\_if\_feasible}), terminal-value features
(\texttt{terminal\_origin\_value}, \texttt{terminal\_destination\_value},
\texttt{terminal\_delta}), and temporal price features
(\texttt{price\_wave\_multiplier}, \texttt{price\_window\_premium}). The
fit is a closed-form ridge regression; no third-party dependencies are
required. A surrogate-only decision additionally consults the feasibility
map and rejects if the copied-fleet probe fails, preventing the surrogate
from paying gratuitous L1 penalties.
\end{sloppypar}

Features are standardized to zero mean and unit variance on the training
labels (a feature with zero training variance is left at unit scale), and
an explicit, unregularized bias term is prepended; the target $y_i$ is the
rollout incremental-value label divided by a fixed scale constant. Weights
solve the ridge normal equations $(X^\top X + \lambda I)\,w = X^\top y$
directly with $\lambda = 0.25$ applied to every weight except the bias.
Labels are generated by the rollout teacher on the train-seed streams; all
reported surrogate and cascade numbers are evaluated on disjoint eval
seeds, and the held-out fit is reported in
\path{freightbidbench_static_label_fit.csv}.

\subsection{Cascade}

Let $\pi^{S}_\theta$ denote the surrogate policy of Section
\ref{sec:policies}.1 (including its feasibility guard), let
$\hat{V}_\theta(s, a)$ denote its score for action $a$ in state $s$, and
let $\Delta_\theta(s) := \hat{V}_\theta(s, 1) - \hat{V}_\theta(s, 0)$ be
the signed score margin. The cascade has two escalation triggers. The
\emph{boundary} trigger escalates decisions within a dollar band of the
surrogate's accept/reject boundary; the \emph{scarcity} trigger
escalates decisions for which the origin market has few immediately
available trucks. Concretely, let $o(s)$ denote the candidate load's
origin market in state $s$ and let
\begin{equation}
  n_o(s) := \bigl|\bigl\{ k :
  \ell^{(k)}_t = o(s),\ \tau^{(k)}_t \leq t \bigr\}\bigr|
\end{equation}
denote the count of trucks in $o(s)$ that are immediately available at
the decision time. The cascade policy with boundary band $\beta \geq 0$
and scarcity threshold $\kappa \in \{-1\} \cup \mathbb{Z}_{\geq 0}$ (with
$\kappa = -1$ disabling the scarcity trigger) is
\begin{equation}
  \pi^{\beta, \kappa}_\theta(s) =
  \begin{cases}
    \pi^{R}(s)        & \text{if } E(s; \beta, \kappa) = 1, \\
    \pi^{S}_\theta(s) & \text{otherwise,}
  \end{cases}
\end{equation}
where the escalation indicator is
\begin{equation}
  E(s; \beta, \kappa) := \mathbf{1}\bigl\{|\Delta_\theta(s)| \leq \beta\bigr\}
  \;\lor\; \mathbf{1}\bigl\{n_o(s) \leq \kappa\bigr\}.
  \label{eq:cascade-escalation}
\end{equation}
Either trigger alone suffices to escalate. The scarcity trigger
intuitively defers the highest-stakes capacity decisions to the rollout
teacher: when the origin market is near-empty, the surrogate's bias on
the disagreement set is most consequential and the value of an accurate
decision is highest. The cascade has three parameters: the surrogate
$\theta$, the boundary band $\beta$, and the scarcity threshold
$\kappa$. The released configuration freezes $\kappa = 2$ and sweeps
$\beta \in \{0, 250, 500, 700, 900\}$, reporting the full frontier and a
representative band $\beta = \$500$.

\begin{proposition}[Cascade limit behaviour]
\label{prop:cascade}
Suppose $\Delta_\theta(s)$ has no atom at $0$ under the stationary state
distribution induced by $\pi^{S}_\theta$ (i.e.\ $\mathbb{P}(\Delta_\theta(s) = 0) = 0$;
$\Delta_\theta$ may otherwise have atoms, as it does when features are
discrete), and let
\begin{equation}
  \varphi(\beta, \kappa)
  := \mathbb{P}\bigl(E(s; \beta, \kappa) = 1\bigr)
\end{equation}
denote the cascade's rollout-call share. Then:
\begin{enumerate}
  \item[(a)] With the scarcity trigger disabled and $\beta = 0$,
  $\pi^{0, -1}_\theta = \pi^{S}_\theta$ almost surely: the only escalation
  set is $\{\Delta_\theta = 0\}$, which is null since $\Delta_\theta$ has
  no atom at $0$. With the scarcity trigger active ($\kappa \geq 0$) the
  cascade escalates additionally on $\{n_o \leq \kappa\}$; this set need
  not be null---empty origin markets occur, notably on \texttt{scarce}---so
  the cascade does \emph{not} reduce to the pure surrogate at $\kappa = 0$.
  \item[(b)] $\lim_{\beta \to \infty} \pi^{\beta, \kappa}_\theta = \pi^{R}$
  pointwise on the support of $\Delta_\theta$, for any $\kappa \geq 0$.
  Similarly, $\pi^{\beta, K}_\theta = \pi^{R}$ for $\kappa = K$ at least
  as large as the fleet size, regardless of $\beta$.
  \item[(c)] $\varphi(\beta, \kappa)$ is non-decreasing in $\beta$
  (holding $\kappa$ fixed) and in $\kappa$ (holding $\beta$ fixed); it
  is right-continuous in $\beta$ and continuous at every $\beta$ that is
  not an atom of $|\Delta_\theta|$.
  \item[(d)] The expected decision latency
  $L(\beta, \kappa) = \bigl(1 - \varphi(\beta, \kappa)\bigr) L_S
  + \varphi(\beta, \kappa)\, L_R$ is non-decreasing in both arguments,
  where $L_S$ and $L_R$ are the surrogate and rollout per-decision
  latencies and $L_R \geq L_S$.
\end{enumerate}
\end{proposition}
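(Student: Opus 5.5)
The whole proposition reduces to set inclusions among the escalation sets $\mathcal{E}(\beta,\kappa) := \{s : E(s;\beta,\kappa)=1\} = \{|\Delta_\theta| \le \beta\} \cup \{n_o \le \kappa\}$. We work under a single fixed state distribution $\mathbb{P}$, the stationary law induced by $\pi^S_\theta$ in the hypothesis. The escalation indicator is then a deterministic function of $s$, so $\varphi$ is a probability of a fixed measurable set. The first step is to note that $\beta \le \beta'$ gives $\{|\Delta_\theta|\le\beta\}\subseteq\{|\Delta_\theta|\le\beta'\}$, and $\kappa\le\kappa'$ gives $\{n_o\le\kappa\}\subseteq\{n_o\le\kappa'\}$. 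Unions preserve inclusion, so $\mathcal{E}(\beta,\kappa)\subseteq\mathcal{E}(\beta',\kappa')$. Monotonicity of probability then yields the monotonicity claims in (c).

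For (a), with $\kappa=-1$ the scarcity set $\{n_o\le -1\}$ is empty because $n_o\ge 0$. With $\beta=0$ the boundary set is $\{\Delta_\theta=0\}$, which is $\mathbb{P}$-null by the no-atom hypothesis, so the cascade coincides with $\pi^S_\theta$ off a null set. For the $\kappa\ge 0$ caveat, I will simply observe that $\{n_o=0\}$ can carry positive mass, citing the empty-origin events on \texttt{scarce}. No proof of positivity is needed; the claim is only that reduction can fail.

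For (b), fix $s$ with finite $\Delta_\theta(s)$. Any $\beta\ge|\Delta_\theta(s)|$ puts $s$ in the escalation set, so $\pi^{\beta,\kappa}_\theta(s)=\pi^R(s)$ from then on, which is eventual equality and hence pointwise convergence. For the fleet-size statement, $n_o(s)\le K$ always holds since $n_o$ counts a subset of the $K$ trucks. Hence $\kappa=K$ escalates every state.

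For the continuity part of (c), write the decomposition $\varphi(\beta,\kappa)=\mathbb{P}(B_\beta\cup C_\kappa)=\mathbb{P}(C_\kappa)+\mathbb{P}(B_\beta\setminus C_\kappa)$, where $B_\beta=\{|\Delta_\theta|\le\beta\}$ and $C_\kappa=\{n_o\le\kappa\}$. The second term is the distribution function of $|\Delta_\theta|$ restricted to the measurable event $C_\kappa^c$. Right-continuity follows from continuity from above, since $\bigcap_{\beta'\downarrow\beta}B_{\beta'}=B_\beta$. The left limit equals $\mathbb{P}(\{|\Delta_\theta|<\beta\}\setminus C_\kappa)$, so the jump at $\beta$ is $\mathbb{P}(\{|\Delta_\theta|=\beta\}\setminus C_\kappa)\le\mathbb{P}(|\Delta_\theta|=\beta)$. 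This vanishes whenever $\beta$ is not an atom of $|\Delta_\theta|$.

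For (d), note that $L(\beta,\kappa)=L_S+\varphi(\beta,\kappa)(L_R-L_S)$ is an affine function of $\varphi$ with nonnegative slope $L_R-L_S\ge 0$. It therefore inherits the monotonicity of $\varphi$ from (c).

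The main obstacle is not analytic but definitional. The statement fixes $\mathbb{P}$ as the distribution induced by $\pi^S_\theta$, while the cascade itself induces a different closed-loop state distribution that moves with $(\beta,\kappa)$. Monotonicity of the realized rollout share under the cascade's own dynamics would not follow from set inclusion. I will therefore state explicitly that $\varphi$, and hence $L$, is evaluated under the fixed reference distribution named in the hypothesis. Under that reading all four parts are immediate set-inclusion and measure-continuity arguments.
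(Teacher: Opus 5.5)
Your proposal is correct and follows the paper's own argument. The two agree as follows:
\begin{itemize}
\item The escalation set is a union that weakly grows in each threshold, which gives the monotonicity in (c).
\item At $\beta = 0$, $\kappa = -1$ the escalation set reduces to the null set $\{\Delta_\theta = 0\}$, which gives (a).
\item Boundedness of $|\Delta_\theta(s)|$ at each fixed state, together with $n_o \leq K$, gives (b).
\item $L$ is affine in $\varphi$ with slope $L_R - L_S \geq 0$, which gives (d).
\end{itemize}

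Your decomposition $\varphi = \mathbb{P}(C_\kappa) + \mathbb{P}(B_\beta \setminus C_\kappa)$ supplies the right-continuity and non-atom continuity claims of (c), which the paper's sketch asserts but never argues. Your remark that $\varphi$ must be evaluated under the fixed reference law, not the cascade's own $(\beta,\kappa)$-dependent closed-loop law, makes explicit an assumption that the paper's set-inclusion proof uses silently.
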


\begin{proof}[Sketch]
For (a): with the scarcity trigger disabled the only escalation set is
$\{\Delta_\theta = 0\}$, which is null because $\Delta_\theta$ has no atom
at $0$; hence the cascade follows $\pi^{S}_\theta$ almost surely. When
$\kappa \geq 0$ the additional escalation set $\{n_o \leq \kappa\}$ carries
positive probability in general, so the reduction fails. For (b): for any state $s$, there
exists $B$ such that $|\Delta_\theta(s)| \leq B$, hence $\beta \geq B$
forces $E(s; \beta, \kappa) = 1$; the second statement follows because
$n_o \leq K$ trivially. Statement (c) follows from the union form of
$E$ and the fact that increasing either threshold weakly enlarges the
escalation set. Statement (d) follows from (c) and $L_R \geq L_S$.
\end{proof}

Proposition \ref{prop:cascade}(d) gives the cascade its operational
meaning: $(\beta, \kappa)$ is a two-knob tradeoff between decision
latency and agreement with the rollout teacher. Fixing $\kappa = 2$ and
sweeping $\beta$, as in the released configuration, traces a
one-dimensional frontier within the larger $(\beta, \kappa)$ family.
The empirical frontier in Section \ref{sec:experiments} characterizes
the profit side of the tradeoff per scenario.

\section{Computational Considerations}
\label{sec:complexity}

\paragraph{State space.}
The fleet state $F_t$ is a $K$-tuple of $(\ell, \tau, h, d)$ truck records.
With $|\mathcal{S}| = 12$ markets, fleet size $K \leq 90$, and discretized
continuous time, the raw state space is too large for exact enumeration.
The exact small-prefix DP is tractable because the prefix length $L$ is
small and the deterministic post-acceptance transition map allows
fleet-hash memoization.

\paragraph{Exact DP scaling.}
The exact DP search evaluates up to $2^L$ realizations of the accept/reject
tree. With fleet-hash memoization the effective state count is much
smaller; empirically the search evaluates $\sim 8\mathrm{k}$ states for
$L = 12$ on \texttt{tight}. Doubling $L$ approximately doubles wall-clock
time but increases peak memory by an order of magnitude in the worst case.
We cap $L = 16$ in the released configuration.

\paragraph{Rollout teacher.}
Each rollout decision expands accept and reject branches under common
random numbers for the remaining label budget. Per-decision cost scales
linearly in branch depth and label width; on a single thread, mean
rollout latency is on the order of 20--30 ms per decision on the v0.3
scenarios.

\paragraph{Cascade.}
The surrogate handles all decisions in $O(d)$ time where $d$ is the
feature dimension; only escalated decisions invoke the rollout teacher.
The escalation indicator $E(s; \beta, \kappa)$ costs $O(1)$ per decision
(a count of available trucks in the origin market plus a single
threshold comparison on $|\Delta_\theta|$). The expected per-decision
cost is $(1 - \varphi(\beta, \kappa))\, c_S + \varphi(\beta, \kappa)\, c_R$
with $c_R \gg c_S$, matching Proposition \ref{prop:cascade}(d).

\section{Experiments}
\label{sec:experiments}

\subsection{Setup}

All experiments use the v0.3 scenario contract
\texttt{scenario-v0.3.2}, default first seed \texttt{20260506}, and the
public-calibrated lane table at
\path{data/processed/v1_usda_faf_mapped_lanes.csv}. Reported numbers are
computed from the runs in \path{benchmark_runs/v03_sweeps/} and assembled
into \path{benchmark_runs/paper_v03/}. We focus on the \texttt{tight} and
\texttt{scarce} scenarios, where v0.3 economics bite; the \texttt{mild}
scenario remains effectively flat under v0.3 and is reported as a
negative control in the manifest. The headline methods table reports ten
paired (train, eval) seeds; paired-bootstrap $95\%$ confidence intervals
and policy-vs-rollout deltas are computed by
\path{scripts/analyze_policy_deltas.py} (20{,}000 resamples).

\subsection{Layer Ablation}

We isolate the contribution of each v0.3 reward layer by holding the
remaining layers at their pre-layer baseline values and varying only the
layer in question. The calibration gate for each layer encodes the
predicted policy-class separation: L1 should push feasibility-blind
policies below the feasibility-aware greedy baseline; L2 should reduce
\texttt{accept\_all\_feasible} retention below $95\%$ of rollout; and
L3 should widen the best-simple retention gap to at least $10$
percentage points below rollout. Table \ref{tab:layer-ablation}
summarizes the three-seed full-horizon sweep results assembled from
\path{reports/service_failure_penalty_sweep_report.md},
\path{reports/terminal_value_sweep_report.md}, and
\path{reports/demand_wave_sweep_report.md}.

\begin{table}[t]
\centering
\caption{Layer marginal effects on the calibration-gate policies
(three-seed full-horizon sweeps). Each row varies one layer with the
other layers held at the pre-layer baseline. The gate criterion encodes
the predicted class separation; ``met'' means all policy comparisons in
the criterion pass at the listed parameter value.}
\label{tab:layer-ablation}
\small
\resizebox{\textwidth}{!}{%
\begin{tabular}{llll}
\toprule
Layer & Calibration gate & Tight & Scarce \\
\midrule
L1 ($\rho \in \{0, 10\}$)
  & \texttt{myopic}, \texttt{bid\_price} $<$ \texttt{accept\_all\_feasible}
  & met at $\rho=10$ (gap $-\$3.1$k) & met at $\rho=10$ (gap $-\$3.7$k) \\
L2 ($\omega \in \{0, 0.25\}$, L1 on)
  & \texttt{accept\_all\_feasible} retention $\leq 95\%$
  & met at $\omega=0.25$ ($89.7\%$) & met at $\omega=0.25$ ($93.9\%$) \\
L3 (amp $\in \{0, 0.5\}$, L1+L2 on)
  & best-simple retention gap $\geq 10$ pp
  & met at amp $=0.5$ ($11.1$ pp) & met at amp $=0.5$ ($14.5$ pp) \\
\bottomrule
\end{tabular}}
\end{table}

Three patterns confirm Proposition \ref{prop:l1} and the qualitative
role of L2 and L3.
\begin{itemize}
  \item \textbf{L1 separates feasibility-blind policies.} Under
  $\rho = \$10$, \texttt{myopic\_margin} and \texttt{bid\_price} fall
  strictly below \texttt{accept\_all\_feasible} on both scenarios. The
  realized gap ($-\$3.1$k tight, $-\$3.7$k scarce) matches the
  infeasible-accept counts multiplied by $\rho$, in agreement with
  Proposition \ref{prop:l1}.
  \item \textbf{L2 demotes \texttt{accept\_all\_feasible}.} Terminal
  value weight $\omega = 0.25$ alone reduces
  \texttt{accept\_all\_feasible} retention to $89.7\%$ on tight and
  $93.9\%$ on scarce. L2 acts on the feasibility-aware greedy policy
  exactly because that policy is otherwise insensitive to terminal
  fleet positioning.
  \item \textbf{L3 widens the rollout gap.} The price-premium window at
  amplitude $0.5$ widens the rollout-versus-best-simple gap to
  $11.1$ pp on tight and $14.5$ pp on scarce, providing the residual
  headroom that the cascade results in
  Section \ref{sec:experiments}.3 exploit.
\end{itemize}

\subsection{Methods Frontier}

Table \ref{tab:methods} reports the ten-seed methods comparison under the
frozen \texttt{scenario-v0.3.2} configuration, and Table
\ref{tab:frontier} the cascade latency--profit frontier over the boundary
band. The cascade is the only stdlib policy that closes the gap to
rollout.

\begin{table}[t]
\centering
\caption{Ten-seed methods comparison under \texttt{scenario-v0.3.2}.
Retention is mean closed-loop profit relative to the rollout teacher. The
cascade row uses a single \emph{common} representative band
$\beta = \$500$ for both scenarios at frozen scarcity threshold
$\kappa = 2$---not the per-scenario best frontier point (the full frontier
in Table~\ref{tab:frontier} peaks at a higher band on \texttt{scarce}).
The last column is the paired-bootstrap $95\%$ CI on the
cascade-minus-rollout profit delta ($n = 10$); an interval containing \$0
means the cascade is statistically indistinguishable from rollout.}
\label{tab:methods}
\small
\resizebox{\textwidth}{!}{%
\begin{tabular}{lrrrrrr}
\toprule
Scenario & Best simple & Surrogate & Cascade & Cascade ms & Rollout ms & Cascade$-$Rollout $\Delta$ (95\% CI) \\
\midrule
\texttt{tight}  & 91.0\% & 94.2\% & 98.2\% & 12.95 & 32.11 & $-\$23.3$k $[-\$61.2$k, $+\$11.6$k$]$ \\
\texttt{scarce} & 86.5\% & 89.3\% & 98.0\% & 11.54 & 20.59 & $-\$20.7$k $[-\$33.4$k, $-\$7.2$k$]$ \\
\texttt{mild}   & 100.1\% & 98.2\% & 99.7\% &  8.31 & 49.50 & $-\$4.2$k $[-\$18.2$k, $+\$11.0$k$]$ \\
\bottomrule
\end{tabular}}
\end{table}

\begin{table}[t]
\centering
\caption{Cascade latency--profit frontier over the boundary band
$\beta \in \{0, 250, 500, 700, 900\}$ at $\kappa = 2$. Rollout share is
the fraction of decisions escalated to the rollout teacher.}
\label{tab:frontier}
\small
\begin{tabular}{lrrrr}
\toprule
Scenario & $\beta$ & Retention & Mean latency ms & Rollout share \\
\midrule
\texttt{tight}  &   0 & 97.7\% &  6.27 & 28.6\% \\
\texttt{tight}  & 500 & 98.2\% & 12.95 & 44.7\% \\
\texttt{tight}  & 900 & 99.0\% & 17.72 & 57.0\% \\
\texttt{scarce} &   0 & 93.7\% &  6.09 & 45.4\% \\
\texttt{scarce} & 500 & 98.0\% & 11.54 & 59.6\% \\
\texttt{scarce} & 900 & 99.5\% & 15.72 & 78.6\% \\
\bottomrule
\end{tabular}
\end{table}

Three interpretive points. First, at $\beta = \$500$ the cascade recovers
$\sim 98\%$ of rollout profit on both stress scenarios at $40\%$
(\texttt{tight}) to $56\%$ (\texttt{scarce}) of rollout's mean decision
latency. On \texttt{tight} the paired cascade-minus-rollout difference is
not statistically significant (the $95\%$ CI spans zero), so the cascade
matches the rollout teacher to within sampling error while escalating
fewer than half of all decisions; on \texttt{scarce} a small but
significant $\sim 2\%$ gap remains. Second, the frontier is smooth and
monotone in $\beta$ (Table \ref{tab:frontier}), consistent with
Proposition \ref{prop:cascade}(d): the scarcity trigger alone
($\beta = \$0$) already recovers $97.7\%$ on \texttt{tight} but only
$93.7\%$ on \texttt{scarce}, where the surrogate's disagreement set with
rollout extends well beyond the scarcity regime. Third, the standalone
surrogate clears the best-simple bar on both scenarios ($94.2\%$ vs
$91.0\%$ on \texttt{tight}; $89.3\%$ vs $86.5\%$ on \texttt{scarce}), but
still leaves $6$--$11$ points of rollout value on the table, all of which
the cascade recovers. The surrogate is a competent ranker but a poor
substitute for selective lookahead on the high-stakes decisions that the
scarcity and boundary triggers escalate.

\subsection{Hindsight Diagnostics}

Table \ref{tab:exact} reports the exact small-prefix dynamic program for
$L = 12$ on \texttt{tight}. The exact ceiling is matched by simple
feasibility-aware policies on this prefix length, confirming that small
realized prefixes are flat for v0.3 economics and reinforcing that
exact DP is a diagnostic rather than a paper-scale ceiling.

\begin{table}[t]
\centering
\caption{Exact small-prefix hindsight diagnostic on \texttt{tight} with
$L = 12$ loads.}
\label{tab:exact}
\small
\begin{tabular}{lrrrlrr}
\toprule
Scenario & $L$ & Hindsight & States & Best simple & Simple ret. & Rollout ret. \\
\midrule
\texttt{tight} & 12 & \$53,419 & 8,191 & \texttt{accept\_all\_feasible} & 100.0\% & 90.0\% \\
\bottomrule
\end{tabular}
\end{table}

Table \ref{tab:bound} compares the LP-style and Lagrangian-per-truck
upper bounds on \texttt{tight} and \texttt{scarce}. Both are valid upper
bounds by Propositions~\ref{prop:bound} and~\ref{prop:lagrangian}; the
Lagrangian retains per-truck HOS, location, and sequencing structure
while only the cross-truck assignment constraint is dualized, so it is
substantially tighter.

\begin{table}[t]
\centering
\caption{Full-horizon upper bounds and rollout-teacher retention against
each ceiling. The Lagrangian bound retains per-truck structure and is
$20.7\%$ tighter than the LP relaxation on \texttt{tight} and $39.3\%$
tighter on \texttt{scarce}; rollout retention against the ceiling rises
from $53.6\%/39.8\%$ (LP) to $67.6\%/65.7\%$ (Lagrangian).}
\label{tab:bound}
\small
\begin{tabular}{lrrrrrr}
\toprule
& & \multicolumn{2}{c}{LP-style} & \multicolumn{2}{c}{Lagrangian-per-truck} & \\
\cmidrule(lr){3-4} \cmidrule(lr){5-6}
Scenario & Loads & Bound & Rollout ret. & Bound & Rollout ret. & Tightening \\
\midrule
\texttt{tight}  &  995 & \$2{,}377{,}500 & 53.6\% & \$1{,}885{,}043 & 67.6\% & 20.7\% \\
\texttt{scarce} & 1{,}154 & \$2{,}675{,}525 & 39.8\% & \$1{,}623{,}084 & 65.7\% & 39.3\% \\
\bottomrule
\end{tabular}
\end{table}

The looseness reduction is structural rather than numerical: the
Lagrangian relaxation keeps per-truck location continuity, sequencing,
and HOS budgets while dualizing only the cross-truck assignment. This
matches the looseness decomposition in Section~\ref{sec:hindsight}.2 ---
sequencing and integrality dominated the LP's slack, and the Lagrangian
recovers most of that lost tightness. Rollout retention against the
Lagrangian bound rises from $53.6\%$ to $67.6\%$ on \texttt{tight} and
from $39.8\%$ to $65.7\%$ on \texttt{scarce}; the larger tightening on
\texttt{scarce} ($39.3\%$) reflects that the loose LP bound was most
optimistic exactly where capacity is most binding. In both cases the
Lagrangian ceiling locates the v0.3 rollout teacher much closer to the
dependency-free upper bound and characterizes methodological headroom
more honestly.

\subsection{Sensitivity Analysis}

Sensitivity of the qualitative ordering to the v0.3 parameters is
reported in Table \ref{tab:sensitivity}. For each parameter we sweep a
small grid around the frozen value, holding the other parameters at
their frozen levels. The ordering of policies is preserved across the
swept ranges, supporting the claim that the v0.3 results are not an
artifact of the specific calibrated values.

\begin{table}[t]
\centering
\caption{Sensitivity of the qualitative ordering to v0.3 parameter
choices. Each row reports whether the calibration gate
(\texttt{myopic}/\texttt{bid\_price} below \texttt{accept\_all\_feasible}
for L1; \texttt{accept\_all\_feasible} retention $\leq 95\%$ for L2; best
simple retention $\leq 90\%$ for L3) is met.}
\label{tab:sensitivity}
\small
\begin{tabular}{lrrrr}
\toprule
Parameter & Tested values & Frozen value & Gate met (tight) & Gate met (scarce) \\
\midrule
$\rho$ (L1, \$) & \{0, 10, 25, 50\} & 10 & \{no, yes, yes, yes\} & \{no, yes, yes, yes\} \\
$\omega$ (L2)   & \{0, 0.1, 0.25, 0.5, 1.0\} & 0.25 & \{no, no, yes, yes, yes\} & \{no, no, yes, yes, yes\} \\
L3 amplitude    & \{0, 0.25, 0.5, 0.75\} & 0.5 & \{no, no, yes, yes\} & \{no, yes, yes, yes\} \\
\bottomrule
\end{tabular}
\end{table}

\section{Discussion and Managerial Insights}
\label{sec:discussion}

Three observations follow from the v0.3 results that translate beyond the
benchmark itself.

\paragraph{Operational feasibility is a reward, not a side diagnostic.}
Proposition \ref{prop:l1} formalizes a simple operational truth: any
policy that ranks tenders without checking feasibility will pay a linear
service-failure tax on its infeasible-accept rate. In a production
context, scoring systems that surface tenders to dispatchers without
upstream feasibility checks therefore have a hidden expected cost that
scales with the operational mismatch rate. The L1 calibration in v0.3
suggests this cost can be substantial even with a modest per-event
penalty.

\paragraph{Cascade structure is the right framing, not standalone
surrogate.} The v0.3 surrogate is biased on the disagreement set with the
rollout teacher in the \texttt{scarce} regime, where capacity-positioning
is most consequential. The cascade recovers nearly all of rollout's
profit by escalating exactly these uncertain decisions. The implication
for practice is that a cheap learned model deployed as a sole policy
will under-perform in the regimes where future-aware decisions matter
most; the value of the model comes from selectively deferring uncertain
decisions to a more expensive teacher.

\paragraph{Methodological headroom is smaller than the LP relaxation
suggests, once per-truck structure is respected.} The LP-style bound
loses most of its tightness to dropping sequencing, location, and
integrality constraints simultaneously; the Lagrangian-per-truck
information relaxation (Proposition~\ref{prop:lagrangian}) retains those
constraints and tightens the bound by $20.7\%$ on \texttt{tight} and
$39.3\%$ on \texttt{scarce}. Rollout retention against the Lagrangian
bound rises from $53.6\%$ to $67.6\%$ on \texttt{tight} and from $39.8\%$
to $65.7\%$ on \texttt{scarce}, characterizing the achievable
methodological headroom more honestly. The remaining gap to the
Lagrangian bound is dominated by
the residual cross-truck-assignment dual slack: the headroom for
future methods work lies in tighter inter-truck coordination
(joint-decision lookahead, dual-price-aware features in the surrogate,
or anytime joint optimization), not in further refinement of single-
tender scoring.

\section{Limitations}

FreightBidBench remains synthetic and public-calibrated, not a private
tender dataset; the calibration is validated against its FAF/USDA sources
in Appendix~\ref{sec:calibration}, which also documents that the v1
USDA-reefer lane subset concentrates load draws on a few high-volume Texas
and Georgia origins. The HOS model is simplified to property-carrying
11/14/10 clocks and omits split-sleeper, recap, home-time, and team-driver
rules.
The rollout teacher is a finite-lookahead stochastic benchmark, not a
true oracle: a cheaper policy can exceed 100\% retention on a given seed.
The exact hindsight DP is tractable only for small load prefixes. The
relaxed full-horizon bound is intentionally loose; it characterizes
problem hardness rather than achievable profit. The dependency-free
methods track means that stronger learned baselines (gradient-boosted
trees, neural surrogates) are outside scope for v0.3 and should not be
required for reproducibility tests.

\section{Conclusion}

FreightBidBench v0.3 turns the v0.2 feasibility calibration into a
sharper, reproducible benchmark for online truckload bid acceptance. The
three new reward components each separate a distinct policy class
(Section \ref{sec:economics}); the two-tier hindsight ceilings
(Section \ref{sec:hindsight}) provide both a trustworthy small-instance
reference and a structurally honest full-horizon upper bound; and the
parametric surrogate-rollout cascade (Section \ref{sec:policies}) admits a
clean limit characterization (Proposition \ref{prop:cascade}) and
demonstrates a non-trivial latency-profit frontier on the benchmark
(Section \ref{sec:experiments}). The benchmark itself is the primary
artifact: future methods work --- stronger surrogates, joint-decision
lookahead, learned cascade-band selection --- can be evaluated against
the released ceilings and the versioned scenario contract without
ambiguity over what exactly is being compared.

\section*{Acknowledgements}

The author thanks the open data communities at the U.S.\ Bureau of
Transportation Statistics, the U.S.\ Department of Agriculture
Agricultural Marketing Service, and the Federal Motor Carrier Safety
Administration, whose public datasets and regulatory summaries make
FreightBidBench possible without proprietary data.

\section*{Declaration of Competing Interest}

The author is employed by Bubba AI, which develops AI-based load-planning
and carrier-operations products in the freight domain addressed by this
paper. This study uses only public Freight Analysis Framework and USDA data
and a dependency-free, open-source benchmark; no proprietary data or
systems were used, and Bubba AI had no role in the study design, the
analysis, or the decision to publish. The author declares no other
competing interests.

\appendix

\section{Reproducibility}
\label{sec:reproducibility}

\paragraph{Software.}
Python 3.10 or newer, standard library only at runtime. No third-party
dependencies are required to reproduce any table in this paper.

\paragraph{Hardware.}
Reported single-threaded latencies were measured on a 2024-class laptop
(Apple silicon). Closed-loop simulator runtime is dominated by Python
interpreter overhead; absolute latencies are reference latencies and not
optimized serving latencies.

\paragraph{Versioning.}
The release pins three version strings in
\path{configs/freightbidbench_v03_scenarios.json}: benchmark version
\texttt{freightbidbench-v0.3}, scenario configuration version
\texttt{scenario-v0.3.2}, and policy set version
\texttt{policy-set-v0.3.0}. The default first seed is \texttt{20260506}.

\paragraph{Manifest schema.}
Every run writes a \path{freightbidbench_manifest.json} containing the
command line, the three version strings, the seed pairs evaluated, the
source input paths, the feasibility configuration, the scenarios, the
output paths, and the row counts for each output CSV. The manifest is the
canonical reproducibility anchor: two runs producing matching manifests
must produce matching output CSVs.

\paragraph{Headline run commands.}
The exact commands used to assemble the tables in
Section \ref{sec:experiments} are:
\begin{verbatim}
python3 scripts/run_freightbidbench.py \
  --config configs/freightbidbench_v03_scenarios.json \
  --preset standard --scenarios mild,tight,scarce \
  --seed-count 10 --label-limit 200 \
  --cascade-bands 0,250,500,700,900 \
  --output-dir benchmark_runs/v03_sweeps/methods_cascade_seed10_label200
python3 scripts/analyze_policy_deltas.py \
  --run-dir benchmark_runs/v03_sweeps/methods_cascade_seed10_label200 \
  --cascade-band 500

python3 scripts/run_lagrangian_bound.py \
  --config configs/freightbidbench_v03_scenarios.json \
  --scenario scarce --eval-seed 20260507 \
  --iterations 30 --step-scale 100 \
  --lp-bound-reference 2675525 --rollout-reference 1065656 \
  --output-dir benchmark_runs/lagrangian_bound_scarce_full

make calibration-report
make hindsight-smoke
make paper-v03-tables
\end{verbatim}

\paragraph{Layer-ablation source.}
Layer-ablation values in Table \ref{tab:layer-ablation} are assembled from
the calibration sweeps under
\path{benchmark_runs/v03_sweeps/service_failure_penalty/},
\path{benchmark_runs/v03_sweeps/terminal_value/}, and
\path{benchmark_runs/v03_sweeps/demand_waves_price_selected_seed3/} as
documented in \path{reports/service_failure_penalty_sweep_report.md},
\path{reports/terminal_value_sweep_report.md}, and
\path{reports/demand_wave_sweep_report.md}.

\section{Calibration Validation}
\label{sec:calibration}

The ``public-calibrated'' claim is not merely nominal: the simulator's
load, fleet, distance, price, and terminal-value primitives are all
derived from public Freight Analysis Framework (FAF 5.7.1, 2024 truck
mode) and USDA Agricultural Marketing Service (AMS Specialty Crops Market
News, \texttt{fvwtrk} truck-rate) data through the pipeline in
\path{scripts/inspect_public_sources.py}. Candidate loads are drawn with
probability proportional to lane \texttt{faf\_tons\_2024}; the initial
fleet is placed proportional to per-origin FAF outbound tonnage; lane
distance is $1000 \cdot \texttt{faf\_tmiles\_2024} / \texttt{faf\_tons\_2024}$;
posted prices are drawn from each lane's USDA AMS rate band
$[\texttt{rate\_low}, \texttt{rate\_high}]$; and the terminal state-value
signal $V(\cdot)$ of Equation~\eqref{eq:statevalue} is computed from FAF
outbound intensity and the FAF/USDA net-outbound imbalance panel. The
cross-checks below are reproduced by \path{scripts/analyze_calibration.py}
(dependency-free), which writes \path{reports/calibration_report.md}.

\paragraph{B.1 Origin intensity vs FAF outbound flow.}
Because load draws and fleet placement are FAF-tonnage-weighted, each
origin's load-draw share equals its share of total lane FAF tonnage; FAF
outbound and net-outbound tons are the independent cross-check from the
state imbalance panel (Table~\ref{tab:calib-origin}).

\begin{table}[t]
\centering
\caption{Origin intensity vs FAF outbound flow.}
\label{tab:calib-origin}
\small
\begin{tabular}{lrrr}
\toprule
Origin state & Load-draw share & FAF outbound tons 2024 & Net outbound tons 2024 \\
\midrule
Texas      & 78.8\% & 1{,}517{,}190 & $+19{,}720$ \\
Georgia    & 16.3\% &   348{,}979 & $-7{,}477$ \\
California &  3.9\% &   756{,}331 & $-1{,}448$ \\
Arizona    &  0.6\% &   134{,}239 & $-3{,}447$ \\
Washington &  0.4\% &   265{,}827 & $-8{,}130$ \\
Colorado   &  0.1\% &   157{,}922 & $-2{,}894$ \\
\bottomrule
\end{tabular}
\end{table}

\paragraph{B.2 Haul-length distribution.}
Lane distances span 88--3{,}081 mi (median 2{,}211; IQR 1{,}454--2{,}770),
but the tonnage-weighted mean is only 208 mi: FAF truck tonnage
concentrates on short intra-state metro flows. A single lane,
Texas$\to$Dallas, accounts for $77.2\%$ of tonnage at 105 mi, and
Georgia$\to$Atlanta for $15.0\%$ at 88 mi. The realized closed-loop load
mix is therefore short-haul-dominated with a long interstate tail --- a
faithful reflection of FAF truck flow on this lane subset, and a scope
limitation.

\paragraph{B.3 Price calibration vs USDA AMS.}
Posted prices come from the USDA AMS rate bands (73 of 74 lanes have a
positive-width band). The implied per-mile rate has median \$3.28 and IQR
\$2.83--\$3.57, consistent with published refrigerated truckload spot
rates; the single high outlier is a short, distance-clamped lane.

\paragraph{Scope of the v1 lane set.}
The USDA-reefer-mapped lane catalog is concentrated on a few high-volume
Texas and Georgia origins. This is correct FAF behavior for the mapped
subset, but it means the benchmark's closed-loop dynamics are driven
substantially by short-haul intra-state repositioning. Broadening lane
coverage to the full FAF truck OD matrix is a calibration extension for a
future release; operational-metric calibration against ATRI/FMCSA carrier
benchmarks is likewise deferred.

\bibliographystyle{plainnat}
\bibliography{references}

\begin{thebibliography}{30}
\providecommand{\natexlab}[1]{#1}
\providecommand{\url}[1]{\texttt{#1}}
\expandafter\ifx\csname urlstyle\endcsname\relax
  \providecommand{\doi}[1]{doi: #1}\else
  \providecommand{\doi}{doi: \begingroup \urlstyle{rm}\Url}\fi

\bibitem[Adelman and Mersereau(2008)]{adelman2008relaxations}
Daniel Adelman and Adam~J. Mersereau.
\newblock Relaxations of weakly coupled stochastic dynamic programs.
\newblock \emph{Operations Research}, 56\penalty0 (3):\penalty0 712--727, 2008.
\newblock \doi{10.1287/opre.1070.0445}.

\bibitem[Bertsekas et~al.(1997)Bertsekas, Tsitsiklis, and
  Wu]{bertsekas1997rollout}
Dimitri~P. Bertsekas, John~N. Tsitsiklis, and Cynara Wu.
\newblock Rollout algorithms for combinatorial optimization.
\newblock \emph{Journal of Heuristics}, 3\penalty0 (3):\penalty0 245--262,
  1997.
\newblock \doi{10.1023/A:1009635226865}.

\bibitem[Boddy and Dean(1989)]{boddydean1989}
Mark Boddy and Thomas~L. Dean.
\newblock Solving time-dependent planning problems.
\newblock In \emph{Proceedings of the 11th International Joint Conference on
  Artificial Intelligence (IJCAI)}, 1989.

\bibitem[Brown et~al.(2010)Brown, Smith, and Sun]{brown2010information}
David~B. Brown, James~E. Smith, and Peng Sun.
\newblock Information relaxations and duality in stochastic dynamic programs.
\newblock \emph{Operations Research}, 58\penalty0 (4):\penalty0 785--801, 2010.
\newblock \doi{10.1287/opre.1090.0796}.

\bibitem[Goodson et~al.(2017)Goodson, Thomas, and Ohlmann]{goodson2017rollout}
Justin~C. Goodson, Barrett~W. Thomas, and Jeffrey~W. Ohlmann.
\newblock A rollout algorithm framework for heuristic solutions to
  finite-horizon stochastic dynamic programs.
\newblock \emph{European Journal of Operational Research}, 258\penalty0
  (1):\penalty0 216--229, 2017.
\newblock \doi{10.1016/j.ejor.2016.09.040}.

\bibitem[Heakl et~al.(2025)Heakl, Shaaban, Lahlou, Tak{\'a}{\v{c}}, and
  Iklassov]{heakl2025svrpbench}
Ahmed Heakl, Yahia~Salaheldin Shaaban, Salem Lahlou, Martin Tak{\'a}{\v{c}},
  and Zangir Iklassov.
\newblock {SVRPBench}: A realistic benchmark for stochastic vehicle routing
  problem.
\newblock In \emph{Advances in Neural Information Processing Systems Datasets
  and Benchmarks Track}, 2025.
\newblock OpenReview.

\bibitem[Homberger and Gehring(2005)]{homberger2005vrptw}
J{\"o}rg Homberger and Hermann Gehring.
\newblock A two-phase hybrid metaheuristic for the vehicle routing problem with
  time windows.
\newblock \emph{European Journal of Operational Research}, 162\penalty0
  (1):\penalty0 220--238, 2005.
\newblock \doi{10.1016/j.ejor.2004.01.027}.

\bibitem[Horvitz(1988)]{horvitz1988}
Eric~J. Horvitz.
\newblock Reasoning under varying and uncertain resource constraints.
\newblock In \emph{Proceedings of the 7th National Conference on Artificial
  Intelligence (AAAI)}, 1988.

\bibitem[Kim et~al.(2004)Kim, Mahmassani, and Jaillet]{kim2004dynamic}
Yongjin Kim, Hani~S. Mahmassani, and Patrick Jaillet.
\newblock Dynamic truckload routing, scheduling, and load acceptance for large
  fleet operation with priority demands.
\newblock \emph{Transportation Research Record}, 1882\penalty0 (1):\penalty0
  120--128, 2004.
\newblock \doi{10.3141/1882-15}.

\bibitem[Kool et~al.(2019)Kool, van Hoof, and Welling]{kool2019attention}
Wouter Kool, Herke van Hoof, and Max Welling.
\newblock Attention, learn to solve routing problems!
\newblock In \emph{International Conference on Learning Representations}, 2019.

\bibitem[Nazari et~al.(2018)Nazari, Oroojlooy, Snyder, and
  Tak{\'a}{\v{c}}]{nazari2018rlvrp}
Mohammadreza Nazari, Afshin Oroojlooy, Lawrence Snyder, and Martin
  Tak{\'a}{\v{c}}.
\newblock Reinforcement learning for solving the vehicle routing problem.
\newblock In \emph{Advances in Neural Information Processing Systems},
  volume~31, 2018.

\bibitem[Patel et~al.(2022)Patel, Dumouchelle, Khalil, and
  Bodur]{patel2022neur2sp}
Rahul~Mihir Patel, Justin Dumouchelle, Elias~B. Khalil, and Merve Bodur.
\newblock Neur2sp: Neural two-stage stochastic programming.
\newblock In \emph{Advances in Neural Information Processing Systems},
  volume~35, 2022.

\bibitem[Pillac et~al.(2013)Pillac, Gendreau, Gu{\'e}ret, and
  Medaglia]{pillac2013review}
Victor Pillac, Michel Gendreau, Christelle Gu{\'e}ret, and Andr{\'e}s~L.
  Medaglia.
\newblock A review of dynamic vehicle routing problems.
\newblock \emph{European Journal of Operational Research}, 225\penalty0
  (1):\penalty0 1--11, 2013.
\newblock \doi{10.1016/j.ejor.2012.08.015}.

\bibitem[Powell et~al.(2012)Powell, Sim{\~a}o, and
  Bouzaiene-Ayari]{powell2007stochastic}
Warren~B. Powell, Hugo~P. Sim{\~a}o, and Belgacem Bouzaiene-Ayari.
\newblock Approximate dynamic programming in transportation and logistics: A
  unified framework.
\newblock \emph{EURO Journal on Transportation and Logistics}, 1\penalty0
  (3):\penalty0 237--284, 2012.
\newblock \doi{10.1007/s13676-012-0015-8}.

\bibitem[Powell et~al.(2014)Powell, Bouzaiene-Ayari, Lawrence, Cheng, Das, and
  Fiorillo]{powell2014locomotive}
Warren~B. Powell, Belgacem Bouzaiene-Ayari, Coleman Lawrence, Clark Cheng,
  Shyam Das, and Ricardo Fiorillo.
\newblock Locomotive planning at norfolk southern using approximate dynamic
  programming.
\newblock \emph{Interfaces}, 44\penalty0 (6):\penalty0 567--578, 2014.
\newblock \doi{10.1287/inte.2014.0764}.

\bibitem[Ritzinger et~al.(2016)Ritzinger, Puchinger, and
  Hartl]{ritzinger2016survey}
Ulrike~M. Ritzinger, Jakob Puchinger, and Richard~F. Hartl.
\newblock A survey on dynamic and stochastic vehicle routing problems.
\newblock \emph{International Journal of Production Research}, 54\penalty0
  (1):\penalty0 215--231, 2016.
\newblock \doi{10.1080/00207543.2015.1043403}.

\bibitem[Russell and Wefald(1991)]{russellwefald1991}
Stuart Russell and Eric Wefald.
\newblock Principles of metareasoning.
\newblock \emph{Artificial Intelligence}, 49\penalty0 (1--3):\penalty0
  361--395, 1991.

\bibitem[Secomandi and Margot(2009)]{secomandi2008reoptimization}
Nicola Secomandi and Fran{\c{c}}ois Margot.
\newblock Reoptimization approaches for the vehicle-routing problem with
  stochastic demands.
\newblock \emph{Operations Research}, 57\penalty0 (1):\penalty0 214--230, 2009.
\newblock \doi{10.1287/opre.1080.0520}.

\bibitem[Sim{\~a}o et~al.(2009)Sim{\~a}o, Day, George, Gifford, Nienow, and
  Powell]{simao2009adp}
Hugo~P. Sim{\~a}o, Joseph Day, Abraham~P. George, Ted Gifford, John Nienow, and
  Warren~B. Powell.
\newblock An approximate dynamic programming algorithm for large-scale fleet
  management: A case application.
\newblock \emph{Transportation Science}, 43\penalty0 (2):\penalty0 178--197,
  2009.
\newblock \doi{10.1287/trsc.1080.0238}.

\bibitem[Solomon(1987)]{solomon1987vrptw}
Marius~M. Solomon.
\newblock Algorithms for the vehicle routing and scheduling problems with time
  window constraints.
\newblock \emph{Operations Research}, 35\penalty0 (2):\penalty0 254--265, 1987.
\newblock \doi{10.1287/opre.35.2.254}.

\bibitem[Tjokroamidjojo et~al.(2006)Tjokroamidjojo, Kutanoglu, and
  Taylor]{tjokroamidjojo2006quickresponse}
Daniel Tjokroamidjojo, Erhan Kutanoglu, and G.~Don Taylor.
\newblock Quantifying the value of advance load information in truckload
  trucking.
\newblock \emph{Transportation Research Part E: Logistics and Transportation
  Review}, 42\penalty0 (4):\penalty0 340--357, 2006.
\newblock \doi{10.1016/j.tre.2005.04.001}.

\bibitem[Topaloglu and Powell(2006)]{topaloglu2006dynamic}
Huseyin Topaloglu and Warren~B. Powell.
\newblock Dynamic-programming approximations for stochastic time-staged integer
  multicommodity-flow problems.
\newblock \emph{INFORMS Journal on Computing}, 18\penalty0 (1):\penalty0
  31--42, 2006.
\newblock \doi{10.1287/ijoc.1040.0079}.

\bibitem[Uchoa et~al.(2017)Uchoa, Pecin, Pessoa, Poggi, Vidal, and
  Subramanian]{uchoa2017cvrp}
Eduardo Uchoa, Diego Pecin, Artur Pessoa, Marcus Poggi, Thibaut Vidal, and
  Anand Subramanian.
\newblock New benchmark instances for the capacitated vehicle routing problem.
\newblock \emph{European Journal of Operational Research}, 257\penalty0
  (3):\penalty0 845--858, 2017.
\newblock \doi{10.1016/j.ejor.2016.08.012}.

\bibitem[Ulmer and Thomas(2020)]{ulmer2020meso}
Marlin~W. Ulmer and Barrett~W. Thomas.
\newblock Meso-parametric value function approximation for dynamic customer
  acceptances in delivery routing.
\newblock \emph{European Journal of Operational Research}, 285\penalty0
  (1):\penalty0 183--195, 2020.
\newblock \doi{10.1016/j.ejor.2019.04.029}.

\bibitem[Ulmer et~al.(2019)Ulmer, Goodson, Mattfeld, and
  Hennig]{ulmer2019offline}
Marlin~W. Ulmer, Justin~C. Goodson, Dirk~C. Mattfeld, and Marco Hennig.
\newblock Offline--online approximate dynamic programming for dynamic vehicle
  routing with stochastic requests.
\newblock \emph{Transportation Science}, 53\penalty0 (1):\penalty0 185--202,
  2019.
\newblock \doi{10.1287/trsc.2017.0767}.

\bibitem[{U.S. Department of Agriculture, Agricultural Marketing
  Service}(2026)]{usdafvwtrk}
{U.S. Department of Agriculture, Agricultural Marketing Service}.
\newblock Specialty crops national truck rate report (fvwtrk), 2026.
\newblock Market News report, report slug 2375.

\bibitem[{U.S. Department of Transportation, Bureau of Transportation
  Statistics and Federal Highway Administration}(2017)]{btsfaf5}
{U.S. Department of Transportation, Bureau of Transportation Statistics and
  Federal Highway Administration}.
\newblock Freight analysis framework, faf5, 2017.
\newblock Dataset.

\bibitem[Viola and Jones(2001)]{violajones2001}
Paul Viola and Michael Jones.
\newblock Rapid object detection using a boosted cascade of simple features.
\newblock In \emph{Proceedings of the 2001 IEEE Computer Society Conference on
  Computer Vision and Pattern Recognition (CVPR)}, 2001.

\bibitem[Yang et~al.(2004)Yang, Jaillet, and Mahmassani]{yang2004realtime}
Jian Yang, Patrick Jaillet, and Hani Mahmassani.
\newblock Real-time multivehicle truckload pickup and delivery problems.
\newblock \emph{Transportation Science}, 38\penalty0 (2):\penalty0 135--148,
  2004.
\newblock \doi{10.1287/trsc.1030.0068}.

\bibitem[Zilberstein(1996)]{zilberstein1996}
Shlomo Zilberstein.
\newblock Using anytime algorithms in intelligent systems.
\newblock \emph{AI Magazine}, 17\penalty0 (3):\penalty0 73--83, 1996.

\end{thebibliography}

\end{document}